\title{Algorithmic Connections Between Active Learning and Stochastic Convex Optimization}
\author{
Aaditya Ramdas \\
Machine Learning Department\\
Carnegie Mellon University\\
\texttt{aramdas@cs.cmu.edu}
\and 
Aarti Singh \\
Machine Learning Department\\
Carnegie Mellon University\\
\texttt{aarti@cs.cmu.edu}
}
\newtheorem{lemma}{Lemma}
\newtheorem{theorem}{Theorem}
\newcommand{\E}{\mathbb{E}}
\newcommand{\R}{\mathbb{R}}
\newcommand{\td}{{\tilde{\delta}}}
\newcommand{\bT} {\mathrm{\Theta}}
\newcommand{\bO} {\mathrm{O}}
\newcommand{\tO}{\mathrm{{\tilde{O}}}}
\newcommand{\tT}{\mathrm{{\tilde{\Theta}}}}
\newcommand{\Risk}{\mathcal{R}}
\newcommand{\hg}{{\hat{g}}}
\newcommand{\hs}{{\hat{s}}}
\begin{document}

\maketitle

\begin{abstract}
Interesting theoretical associations have been established by recent papers between the fields of active learning and stochastic convex optimization due to the common role of feedback in sequential querying mechanisms. In this paper, we continue this thread in two parts by exploiting these relations for the first time to yield novel algorithms in both fields, further motivating the study of their intersection. First, inspired by a recent optimization algorithm that was adaptive to unknown uniform convexity parameters, we present a new active learning algorithm for one-dimensional thresholds that can yield minimax rates by adapting to unknown noise parameters. Next, we show that one can perform $d$-dimensional stochastic minimization of smooth uniformly convex functions when only granted oracle access to noisy gradient signs along any coordinate instead of real-valued gradients, by using a simple randomized coordinate descent procedure where each line search can be solved by $1$-dimensional active learning, provably achieving the same error convergence rate as having the entire real-valued gradient. Combining these two parts yields an algorithm that solves stochastic convex optimization of uniformly convex and smooth functions using only noisy gradient signs by repeatedly performing active learning, achieves optimal rates and is adaptive to all unknown convexity and smoothness parameters.
\end{abstract}

\section{Introduction}

The two fields of convex optimization and active learning seem to have evolved quite independently of each other. Recently, \cite{RR09} pointed out their relatedness due to the inherent sequential nature of both fields and the complex role of feedback in taking future actions. Following that, \cite{RS13} made the connections more explicit by tying together the exponent used in noise conditions in active learning and the exponent used in uniform convexity (UC) in optimization. They used this to establish lower bounds (and tight upper bounds) in stochastic optimization of UC functions based on proof techniques from active learning. However, it was unclear if there were concrete algorithmic ideas in common between the fields.

Here, we provide a positive answer by exploiting the aforementioned connections to form new and interesting algorithms that clearly demonstrate that the complexity of $d$-dimensional stochastic optimization is precisely the complexity of $1$-dimensional active learning. Inspired by an optimization algorithm that was adaptive to unknown uniform convexity parameters, we design an interesting one-dimensional active learner that is also adaptive to unknown noise parameters. This algorithm is simpler than the adaptive active learning algorithm proposed recently in \cite{H11} which handles the pool based active learning setting. 

Given access to this active learner as a subroutine for line search, we show that a simple randomized coordinate descent procedure can minimize uniformly convex functions with a much simpler stochastic oracle that returns only a Bernoulli random variable representing a noisy sign of the gradient in a single coordinate direction, rather than a full-dimensional real-valued gradient vector. The resulting algorithm is adaptive to all unknown UC and smoothness parameters and achieve minimax optimal convergence rates.

We spend the first two sections describing the problem setup and preliminary insights, before describing our algorithms in sections 3 and 4.



\subsection{Setup of First-Order Stochastic Convex Optimization}

First-order stochastic convex optimization is the task of approximately minimizing a convex function over a convex set, given oracle access to unbiased estimates of the function and gradient at any point, using as few queries as possible (\cite{NY83}).

We will assume that we are given an arbitrary set $S\subset \R^d$ of known diameter bound $R = \max_{x,y\in S} \|x-y\|$. A convex function $f$ with $x^* = \arg \min_{x \in S} f(x)$ is said to be $k$-uniformly convex if, for some $\lambda > 0, k \geq 2$, we have for all $x,y \in S$
$$f(y) \geq f(x) + \nabla f(x)^\top (y-x) + \frac{\lambda}{2} \|x-y\|^k$$
(strong convexity arises when $k=2$). $f$ is $L$-Lipschitz for some $L>0$ if $\|\nabla f(x)\|_* \leq L$ (where $\|.\|_*$ is the dual norm of $\|.\|$); equivalently for all $x,y \in S$
\begin{equation*}
|f(x) - f(y)| \leq L \|x-y\|
\end{equation*}
A differentiable $f$ is $H$-strongly smooth (or has a $H$-Lipschitz gradient) for some $H>\lambda$ if for all $x,y \in S$, we have $\|\nabla f(x) - \nabla f(y)\|_* \leq H \|x-y\|$, or equivalently
$$f(y) \leq f(x) + \nabla f(x)^\top (y-x) + \frac{H}{2} \|x-y\|^2$$
In this paper we shall always assume $\|.\| = \|.\|_*=\|.\|_2$ and deal with strongly smooth and uniformly convex functions with parameters $\lambda > 0, k \geq 2$, $L,H>0$.\\
A stochastic first order oracle is a function that accepts $x \in S$, and returns 
$$\Big(\hat{f}(x),\hg(x) \Big) \in \R^{d+1} \mbox{ where } \E \big[\hat{f}(x) \big] = f(x), \E\big[\hat{g}(x)\big]= \nabla f(x)$$
(these unbiased estimates also have bounded variance) and the expectation is over any internal randomness of the oracle. \\
An optimization algorithm is a method that sequentially queries an oracle at points in $S$ and returns $\hat{x}_T$ as an estimate of the optimum of $f$ after $T$ queries (or alternatively tries to achieve an error of $\epsilon$) and their performance can be measured by either function error $f(\hat{x}_T) - f(x^*)$ or point error $\|\hat{x}_T - x^*\|$.\\

\subsection{Stochastic Gradient-Sign Oracles} \label{sgso}
Define a stochastic sign oracle to be a function of $x \in S, j \in \{1...d\}$, that returns 
$$\hs_j(x) \in \{+,-\} \mbox{ where}\footnote{$f=\bT (g)$ means $f=\mathrm{\Omega}(g)$ and $f=\bO (g)$ (rate of growth)} \  \big |\eta(x) - 0.5 \big | = \bT \Big( [\nabla f(x)]_j \Big) \mbox{ and } \eta(x) = \Pr \big ( \hs_j(x) = +  | x \big )$$
where $\hs_j(x)$ is a noisy sign$\big( [\nabla f(x)]_j \big)$ and $[\nabla f(x)]_j$ is the $j$-th coordinate of $\nabla f$, and the probability is over any internal randomness of the oracle. This behavior of $\eta(x)$ actually needs to hold only when $\big |[\nabla f(x)]_j \big|$ is small.

In this paper, we consider coordinate descent algorithms that are motivated by applications where computing the overall gradient, or even a function value, can be expensive due to high dimensionality or huge amounts of data, but computing the gradient in any one coordinate can be cheap. \cite{N10} mentions the example of  $\min_x \frac1{2}\|Ax-b\|^2 + \frac1{2}\|x\|^2$ for some $n \times d$ matrix $A$ (or any other regularization that decomposes over dimensions). Computing the gradient $A^\top (Ax-b) + x$ is expensive, because of the matrix-vector multiply. However, its $j$-th coordinate is $2A^{j\top} (Ax-b) + x_j$ and requires an expense of only $n$ if the residual vector $Ax-b$ is kept track of (this is easy to do, since on a single coordinate update of $x$, the residual change is proportional to $A^j$, an additional expense of $n$). 

A sign oracle is weaker than a first order oracle, and can actually be obtained by returning the sign of the first order oracle's noisy gradient if the mass of the noise distribution grows linearly around its zero mean (argued in next section). At the optimum along coordinate $j$, the oracle returns a $\pm 1$ with equal probability, and otherwise returns the correct sign with a probability proportional to the value of the directional derivative at that point (this is reflective of the fact that the larger the derivative's absolute value, the easier it would be for the oracle to approximate its sign, hence the smaller the probability of error). It is not unreasonable that there may be other circumstances where even calculating the (real value) gradient in the $i$-th direction could be expensive, but estimating its sign could be a much easier task as it only requires estimating whether function values are expected to increase or decrease along a coordinate (in a similar spirit of function comparison oracles \cite{JNR12}, but with slightly more power). 

We will also see that the rates for optimization crucially depend on whether the gradient noise is sign-preserving or not. For instance, with rounding errors or storing floats with small precision, one can get deterministic rates as if we had the exact gradient since the rounding or lower precision doesn't  flip signs.





\subsection{Setup of Active Threshold Learning}

The problem of one-dimensional threshold estimation assumes you have an interval of length $R$, say $[0,R]$. Given a point $x$, it has a label $y \in \{+,-\}$ that is drawn from an unknown conditional distribution $\eta(x) = \Pr \big( Y=+|X=x\big)$ and the threshold $t$ is the unique point where $\eta(x) = 1/2$, with it being larger than half on one side of $t$ and smaller than half on the other (hence it is more likely to draw a $+$ on one side of $t$ and a $-$ on the other side). 

The task of active learning of threshold classifiers allows  the learner to sequentially query $T$ (possibly dependent) points, observing labels drawn from the unknown conditional distribution after each query, with the goal of returning a guess $\hat{x}_T$ as close to $t$ as possible. In the formal study of classification (cf. \cite{T04}), it is common to study minimax rates when the regression function $\eta(x)$ satisfies Tsybakov's noise or margin condition (TNC) with exponent $k$ at the threshold $t$. Different versions of this boundary noise condition are used in regression, density or level-set estimation and lead to an improvement in minimax optimal rates (for classification, also cf. \cite{AT07}, \cite{H11}). Here, we present the version of TNC used in \cite{CN07} :
$$M |x-t|^{k - 1} \geq | \eta(x) - 1/2 | \geq \mu |x-t|^{k - 1} \mbox{ whenever}\footnote{Note that $|x-t| \leq \delta_0 := \left( \frac{\epsilon_0}{M} \right)^{\frac1{k-1}} \implies |\eta(x) - 1/2| \leq \epsilon_0 \implies |x-t| \leq \left( \frac{\epsilon_0}{\mu} \right)^{\frac1{k-1}}$} \ |\eta(x) - 1/2| \leq \epsilon_0 $$
for some constants $M>\mu>0,\epsilon_0 > 0, k \geq 1$. 

A standard measure for how well a classifier $h$ performs is given by its risk, which is simply the probability of classification error (expectation under $0-1$ loss), $\Risk(h) = \Pr \big[ h (x) \neq y \big]$. The performance of threshold learning strategies can be measured by the excess classification risk of the resultant threshold classifier at $\hat{x}_T$ compared to the Bayes optimal classifier at $t$ as given by \footnote{$a \vee b := \max(a,b) \mbox{ and } a \wedge b := \min(a,b)$}
\begin{equation} \label{risk}
\Risk (\hat{x}_T)  - \Risk (t) = \int\limits_{\hat{x}_T \wedge t}^{\hat{x}_T \vee t} | 2 \eta(x) - 1| dx
\end{equation}

In the above expression, akin to \cite{CN07}, we use a uniform marginal distribution for active learning since there is no underlying distribution over $x$. Alternatively, one can simply measure the one-dimensional point error $|\hat{x}_T - t|$ in estimation of the threshold. Minimax rates for estimation of risk and point error in active learning under TNC were provided in \cite{CN07} and are summarized in the next section.

\subsection{Summary of Contributions}

Now that we have introduced the notation used in our paper and some relevant previous work (more in the next section), we can clearly state our contributions.

\begin{itemize}

\item We generalize an idea from \cite{JN10} to present a simple epoch-based active learning algorithm with a passive learning subroutine that can optimally learn one-dimensional thresholds and is adaptive to unknown noise parameters.

\item We show that noisy gradient signs suffice for minimization of uniformly convex functions by proving that a random coordinate descent algorithm with an active learning line-search subroutine achieves minimax convergence rates.

\item Due to the connection between the relevant exponents in the two fields, we can combine the above two methods to get an algorithm that achieves minimax optimal rates and is adaptive to unknown convexity parameters.

\item As a corollary, we argue that with access to possibly noisy non-exact gradients that don't switch any signs (rounding errors or low-precision storage are sign-preserving), we can still achieve exponentially fast deterministic rates.

\end{itemize}

\section{Preliminary Insights}

\subsection{Connections Between Exponents}

Taking one point as $x^*$ in the definition of UC, we see that
$$|f(x) - f(x^*)| \geq \frac{\lambda}{2} \|x-x^*\|^k$$
Since $\|\nabla f(x)\| \|x-x^*\| \geq \nabla f(x)^\top (x-x^*) \geq f(x) - f(x^*)$ (by convexity),
$$\|\nabla f(x) - 0\| \geq \frac{\lambda}{2} \|x-x^*\|^{k-1} $$
Another relevant fact for us will be that uniformly convex functions in $d$ dimensions are uniformly convex along any one direction, or in other words, for every fixed $x \in S$ and fixed unit vector $u \in \R^d$, the univariate function of $\alpha$ defined by $f_{x,u}(\alpha) := f(x + \alpha u)$ is also UC with the same parameters\footnote{Since $f$ is UC, $f_{x,u}(\alpha) \geq f_{x,u}(0) + \alpha \nabla f_{x,u}(0) + \frac{\lambda}{2}|\alpha|^k$}. For $u = e_j$,
$$\big | [\nabla f(x)]_j - 0 \big | \geq \frac{\lambda}{2} \|x-x_{j}^*\|^{k-1}$$
where $x_{j}^* = x + \alpha_j^* e_j$ and $\alpha_{j}^* = \arg \min_{\{\alpha|x + \alpha e_j \in S\}} f(x + \alpha e_j)$. This uncanny similarity to the TNC (since $\nabla f(x^*) = 0$) was mathematically exploited in \cite{RS13} where the authors used a lower bounding proof technique for one-dimensional active threshold learning from \cite{CN07} to provide a new lower bounding proof technique for the $d$-dimensional stochastic convex optimization of UC functions. In particular, they showed that the minimax rate for $1$-dimensional active learning excess risk and the $d$-dimensional optimization function error both scaled like\footnote{we use $\tO, \tT$ to hide constants and polylogarithmic factors} $\tT \left( T^{-\frac{k}{2k-2}}\right)$, and that the point error in both settings scaled like $\tT \left( T^{-\frac{1}{2k-2}}\right)$, where $k$ is either the TNC exponent or the UC exponent, depending on the setting. The importance of this connection cannot be emphasized enough and we will see this being useful throughout this paper.\\
As mentioned earlier \cite{CN07} require a two-sided TNC condition (upper and lower growth condition to provide exact tight rate of growth) in order to prove risk upper bounds. On a similar note, for uniformly convex functions, we will assume such a Local $k$-Strong Smoothness condition around directional minima
$$\mbox{\textbf{Assumption LkSS} : \ \ \ \ for all $j \in \{1...d\}$\ \ \ } \big | [\nabla f(x)]_j - 0 \big | \leq \Lambda \|x-x_{j}^*\|^{k-1} $$
for some constant $\Lambda > \lambda/2$, so we can tightly characterize the rate of growth as
$$\big | [\nabla f(x)]_j - 0 \big | = \bT \Big( \|x-x_{j}^*\|^{k-1} \Big)$$
This condition is implied by strong smoothness or Lipschitz smooth gradients when $k=2$ (for strongly convex and strongly smooth functions), but is a slightly stronger assumption otherwise.


\subsection{The One-Dimensional Argument}

The basic argument for relating optimization to active learning was made in \cite{RS13} in the context of stochastic first order oracles when the noise distribution $\mathrm{P}(z)$ is unbiased and grows linearly around its zero mean, i.e.
$$ \int_0^\infty \mathrm{dP}(z) = \tfrac{1}{2} \ \mbox{ and } \ \int_0^t \mathrm{dP}(z) = \bT ( t ) $$
for all $0 <t < t_0$, for constants $t_0$ (similarly for $-t_0 < t < 0$). This is satisfied for gaussian, uniform and many other distributions. We reproduce the argument for clarity and then sketch it for stochastic signed oracles as well.

For any $x \in S$, it is clear that $f_{x,j}(\alpha) := f(x+\alpha e_j)$ is convex; its gradient $\nabla f_{x,j}(\alpha) := [\nabla f(x + \alpha e_j)]_j$ is an increasing function of $\alpha$ that switches signs at $\alpha^*_j := \arg\min_{\{\alpha | x+ \alpha e_j \in S\}} f_{x,j}(\alpha)$, or equivalently at directional minimum $x^*_j := x + \alpha^*_j e_j$. One can think of sign$([\nabla f(x)]_j)$ as being the true label of $x$, sign$([\nabla f(x)]_j+z)$ as being the observed label, and finding $x_j^*$ as learning the decision boundary (point where labels switch signs). Define regression function
$$\eta(x) := \Pr \Big(\mbox{sign}([\nabla f(x)]_j+z) = +|x \Big)$$
and note that minimizing $f_{x_0,j}$ corresponds to identifying the Bayes threshold classifier as $x_j^*$ because the point at which $\eta(x)=0.5$ or $[\nabla f(x)]_j=0$ is $x_j^*$. Consider a point $x = x^*_j + t e_j$ for $t>0$ with $[\nabla f(x)]_j > 0$ and hence has true label $+$ (a similar argument can be made for $t < 0$). As discussed earlier, $\big| [\nabla f(x)]_j \big| = \bT \Big( \|x-x_j^*\|^{k-1} \Big) = \bT (t^{k-1})$. The probability of seeing label $+$ is the probability that we draw $z$ in $\big(-[\nabla f(x)]_j,\infty \big)$ so that the sign of $[\nabla f(x)]_j+z$ is still positive. Hence, the regression function can be written as
\begin{align*}
\eta(x)  \ &= \  \Pr \Big([\nabla f(x)]_j + z > 0 \Big) \\
 \ &= \  \Pr (z>0) + \Pr \Big(-[\nabla f(x)]_j < z < 0 \Big) \ = \  0.5 + \bT \Big( [\nabla f(x)]_j \Big)
\end{align*}
$$
\implies  \big |\eta(x) - \tfrac{1}{2} \big| \ = \ \bT \Big( [\nabla f(x)]_j \Big) \ = \ \bT \big( t^{k-1} \big) \ = \ \bT \Big( |x-x_j^*|^{k-1} \Big)\label{bz}
$$
Hence, $\eta(x)$ satisfies the TNC with exponent $k$, and an active learning algorithm (next subsection) can be used to obtain a point $\hat{x}_T$ with small point-error and excess risk. Note that function error in convex optimization is bounded above by excess risk of the corresponding active learner using eq (\ref{risk}) because
\begin{align*} \label{ferrorrisk}
f_j(\hat{x}_T) - f_j(x_j^*)  \  &= \ \Bigg| \int\limits^{\hat{x}_T \vee x_j^*}_{\hat{x}_T \wedge x_j^*}  [\nabla f(x)]_j  \mathrm{dx} \Bigg|
\ &= \bT \Bigg( \int\limits^{\hat{x}_T \vee x_j^*}_{\hat{x}_T \wedge x^*_j} |2\eta(x)-1|\mathrm{dx} \Bigg)\\
 \  &=\   \bT \Big(\Risk (\hat{x}_T)\Big)
\end{align*}
Similarly, for stochastic sign oracles (Sec. \ref{sgso}), using $\eta(x) = \Pr \big (\hs_j(x) = + \big) $,
\begin{eqnarray*}
\big| \eta(x) - \tfrac{1}{2} \big| \ = \ \bT \Big([\nabla f(x)]_j\Big) \ = \ \bT \Big (\|x-x^*_j\|^{k-1} \Big)
\end{eqnarray*}

\subsection{A Non-adaptive Active Threshold Learning Algorithm}

One can use a grid-based probabilistic variant of binary search called the BZ algorithm \cite{BZ74} to approximately learn the threshold efficiently in the active setting, in the setting that $\eta(x)$ satisfies the TNC for known $k, \mu, M$ (it is not adaptive to the parameters of the problem - one needs to know these constants beforehand). The analysis of BZ and the proof of the following lemma are discussed in detail in Theorem 1 of \cite{CN09}, Theorem 2 of \cite{CN07} and the Appendix of \cite{RS13}.

\begin{lemma} \label{BZ} 
Given a $1$-dimensional regression function that satisfies the TNC with known parameters $\mu, k$, then after $T$ queries, the BZ algorithm returns a point $\hat{t}$ such that $| \hat{t} - t | = \tT (T^{-\frac{1}{2k - 2}})$ and the excess risk is $\tT (T^{-\frac{k}{2k - 2}})$.
\end{lemma}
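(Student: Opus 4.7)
The plan is to proceed in three stages: first describe the BZ algorithm's update structure, then bound the point error via a potential-function / martingale argument, and finally convert the point-error bound into an excess-risk bound using the TNC condition together with equation (\ref{risk}).

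For the first stage, I would recall that BZ discretizes $[0,R]$ into a grid of resolution $\epsilon \sim T^{-1/(2k-2)}$ and maintains a posterior weight vector $p_t$ over grid points, initialized uniformly. At each round it queries the (weighted) median of $p_t$, observes a noisy label, and updates the weights multiplicatively in a Bayes-like fashion but with a \emph{conservative} likelihood ratio calibrated using the known lower bound $\mu$ from TNC (this is what allows tight constants without knowing $M$). The output $\hat t$ is the posterior median after $T$ queries. The fact that $\mu,k$ are assumed known is exactly what makes this update well defined, avoiding the adaptivity issue that the later sections of the paper aim to fix.

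For the second stage, the main technical estimate is that the log-likelihood ratio between the true threshold $t$ and any grid point at distance $\Delta$ from $t$ is a submartingale with per-step drift $\Omega(\Delta^{2(k-1)})$, since querying close to $t$ gives label $+$ with probability $1/2 + \Theta(\Delta^{k-1})$ by TNC, so the KL contribution per query scales as $\Theta(\Delta^{2(k-1)})$. A standard concentration argument (Azuma on the bounded log-ratio increments, as carried out in \cite{CN09,CN07}) then shows that with high probability the posterior mass on grid points at distance more than $\Delta$ from $t$ decays exponentially in $T \Delta^{2(k-1)}$; choosing $\Delta = \tilde\Theta(T^{-1/(2k-2)})$ makes this mass $o(1)$, which yields $|\hat t - t| = \tilde\Theta(T^{-1/(2k-2)})$. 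The delicate part, and the main obstacle, is controlling the coupling between the query location (itself a function of past labels) and the direction in which the posterior concentrates; this is handled by the conservative update, which ensures the log-ratio remains a supermartingale under the true threshold regardless of the adversarial query sequence.

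For the third stage, I would plug the point-error bound into equation (\ref{risk}). By TNC, $|2\eta(x)-1| = \Theta(|x-t|^{k-1})$ for $x$ in a neighborhood of $t$, so
\begin{equation*}
\Risk(\hat x_T) - \Risk(t) \;=\; \int_{\hat x_T \wedge t}^{\hat x_T \vee t} |2\eta(x) - 1|\, dx \;=\; \Theta\!\left( |\hat x_T - t|^{k} \right) \;=\; \tilde\Theta\!\left(T^{-k/(2k-2)}\right),
\end{equation*}
giving the claimed excess risk rate. The matching lower bounds follow from the minimax arguments in \cite{CN07} and need not be re-derived here since the lemma statement is only asserting what BZ achieves. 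Overall the proof is mostly an invocation of the existing BZ analysis; the contribution of this lemma to the paper is packaging the two rates in a form ready to be used as a subroutine in the coordinate-descent scheme of later sections.
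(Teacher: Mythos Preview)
The paper does not actually prove this lemma: it is stated as a known result, with the sentence immediately preceding it deferring entirely to Theorem~1 of \cite{CN09}, Theorem~2 of \cite{CN07}, and the Appendix of \cite{RS13}. So there is no ``paper's own proof'' to compare against; your proposal is supplying content the paper deliberately outsources.

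As a sketch of the Castro--Nowak analysis your outline is broadly correct and follows the standard route: posterior-weight updates over a grid, a submartingale argument on the log-likelihood ratio with per-step drift $\Theta(\Delta^{2(k-1)})$ coming from the KL between Bernoullis with parameters $1/2 \pm \Theta(\Delta^{k-1})$, and then integration of $|2\eta(x)-1| = \Theta(|x-t|^{k-1})$ to pass from point error to excess risk. One minor looseness worth flagging: the choice $\Delta = \tilde\Theta(T^{-1/(2k-2)})$ gives $T\Delta^{2(k-1)} = \tilde\Theta(1)$, not something growing, so the concentration step needs the log factors explicitly to get a vanishing failure probability; this is exactly where the tilde in $\tT$ enters, and the full argument in \cite{CN07} handles this by the grid resolution and a union bound. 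Otherwise your packaging is faithful to the cited sources and would be acceptable as an expanded proof sketch, though for the purposes of this paper a bare citation (as the authors do) suffices.
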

Due to the described connection between exponents, one can use BZ to approximately optimize a one dimensional uniformly convex function $f_j$ with known uniform convexity parameters $\lambda,k$. 
Hence, the BZ algorithm can be used to find a point with low function error by searching for a point with low risk. This, when combined with Lemma \ref{BZ}, yields the following important result.
\begin{lemma} \label{perror}
Given a $1$-dimensional $k$-UC and LkSS function $f_j$, a line search to find $\hat{x}_T$ close to $x^*_j$ up to accuracy $|\hat{x}_T - x^*_j| \leq \eta$ in point-error can be performed in $\tT (1/\eta^{2k - 2})$ steps using the BZ algorithm. Alternatively, in $T$ steps we can find $\hat{x}_T$ such that $f(\hat{x}_T) - f(x^*_j) = \tT (T^{-\frac{k}{2k - 2}})$.
\end{lemma}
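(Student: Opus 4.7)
The plan is to view this as an essentially immediate corollary of Lemma \ref{BZ} combined with the exponent-matching argument laid out in the preceding ``One-Dimensional Argument'' subsection. The whole content of the lemma is a translation: we need to (i) verify that the one-dimensional regression function induced by noisy gradient/sign queries to $f_j$ satisfies a TNC with exponent $k$ and known constants, so that BZ is applicable, and then (ii) translate BZ's point-error and excess-risk guarantees into the UC line-search guarantees claimed.

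For step (i), I would invoke the UC plus LkSS assumptions directly: along the coordinate direction $e_j$, these give
$$\bigl|[\nabla f(x)]_j\bigr| \;=\; \bT\!\bigl(\|x - x_j^*\|^{k-1}\bigr).$$
Passing this through either the linearly-growing gradient noise model or the stochastic sign oracle via exactly the calculation already carried out in Sec.~2.2 yields
$$\bigl|\eta(x) - \tfrac{1}{2}\bigr| \;=\; \bT\!\bigl(|x - x_j^*|^{k-1}\bigr),$$
i.e.\ Tsybakov's noise condition with the same exponent $k$ and with constants determined by $\lambda, \Lambda$ and the noise constants. Therefore BZ can be run with knowledge of these parameters on the line search instance.

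For step (ii), I would apply Lemma \ref{BZ} to get $|\hat{x}_T - x_j^*| = \tT(T^{-1/(2k-2)})$ and $\Risk(\hat{x}_T) = \tT(T^{-k/(2k-2)})$. Inverting the first expression: demanding the RHS to be $\le \eta$ gives $T = \tT(1/\eta^{2k-2})$, which is the first claim. For the second claim, I would use the equality chain derived just before the lemma,
$$f_j(\hat{x}_T) - f_j(x_j^*) \;=\; \Bigg|\int_{\hat{x}_T \wedge x_j^*}^{\hat{x}_T \vee x_j^*}\![\nabla f(x)]_j\,\mathrm{d}x\Bigg| \;=\; \bT\!\Bigg(\int_{\hat{x}_T \wedge x_j^*}^{\hat{x}_T \vee x_j^*}\!|2\eta(x)-1|\,\mathrm{d}x\Bigg) \;=\; \bT\bigl(\Risk(\hat{x}_T)\bigr),$$
which immediately converts the BZ excess-risk rate into $f_j(\hat{x}_T) - f_j(x_j^*) = \tT(T^{-k/(2k-2)})$.

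There is no real obstacle since the hard work was done in Lemma \ref{BZ} and in Sec.~2.2; the only mild subtlety is making sure that the LkSS upper bound is genuinely needed here (it is: it provides the upper side of the two-sided TNC, which BZ requires to control risk, not just point error), and that the constants in the TNC are indeed determined by the UC/LkSS parameters so ``known $\mu, M, k$'' in Lemma \ref{BZ} corresponds to ``known $\lambda, \Lambda, k$'' for the line-search instance. This observation, that lack of knowledge of $k$ is the real adaptivity barrier, motivates the adaptive active learner developed in the next section.
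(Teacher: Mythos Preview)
Your proposal is correct and follows exactly the approach the paper indicates: the paper does not give an explicit proof of Lemma~\ref{perror} at all, stating only that it follows from the exponent connection of Sec.~2.1--2.2 ``combined with Lemma~\ref{BZ}''. Your two-step translation (UC+LkSS $\Rightarrow$ two-sided TNC with exponent $k$, then invoke Lemma~\ref{BZ} and convert risk to function error via the integral identity) is precisely what the paper has in mind, and your remark that LkSS supplies the upper side of the TNC is the one point worth making explicit.
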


\section{A 1-D Adaptive Active Threshold Learning Algorithm}

We now describe an algorithm for active learning of one-dimensional thresholds that is adaptive, meaning it can achieve the minimax optimal rate even if the TNC parameters $M,\mu,k$ are unknown. It is quite different from the non-adaptive BZ algorithm in its flavour, though it can be regarded as a robust binary search procedure, and its design and proof are inspired from an optimization procedure from \cite{JN10} that is adaptive to unknown UC parameters $\lambda,k$. 

Even though \cite{JN10} considers a specific optimization algorithm (dual averaging), we observe that their algorithm that adapts to unknown UC parameters can use any optimal convex optimization algorithm as a  subroutine within each epoch. Similarly, our adaptive active learning algorithm is epoch-based and can use any optimal passive learning subroutine in each epoch. We note that \cite{H11} also developed an adaptive algorithm based on disagreement coefficient and VC-dimension arguments, but it is in a pool-based setting where one has access to a large pool of unlabeled data, and is much more complicated.

\subsection{An Optimal Passive Learning Subroutine}

The excess risk of passive learning procedures for 1-d thresholds can be bounded by $\bO (T^{-1/2})$ (e.g. see Alexander's inequality in \cite{DGL96} to avoid $\sqrt{\log T}$ factors from ERM/VC arguments) and can be achieved by ignoring the TNC parameters.

Consider such a passive learning procedure under a uniform distribution of samples (mimicked by active learning by querying the domain uniformly) in a ball\footnote{Define $B(x,R) := [x-R,x+R]$}  $B(x_0,R)$ around an arbitrary point $x_0$ of radius $R$ that is known to contain the true threshold $t$. Then without knowledge of $M,\mu, k$, in $T$ steps we can get a point $\hat{x}_T$ close to the true threshold $t$ such that with probability at least $1-\delta$
$$\Risk (\hat{x}) - \Risk(t) = \int\limits_{\hat{x}_T \vee t}^{\hat{x}_T \wedge t} |2\eta(x) - 1|dx \leq \frac{C_\delta R}{\sqrt T}$$  
for some constant $C_\delta$. Assuming  $\hat{x}_T$ lies inside the TNC region,
$$\mu \int\limits_{\hat{x}_T \vee t}^{\hat{x}_T \wedge t} |x - t|^{k-1} dx \leq \int\limits_{\hat{x}_T \vee t}^{\hat{x}_T \wedge t} |2\eta(x) - 1|dx $$
Hence $\frac{\mu |\hat{x}_T-t|^k}{k} \leq \frac{C_\delta R}{\sqrt T}$. Since $k^{1/k} \leq  2$, w.p. at least $1-\delta$ we get a point-error
\begin{equation}\label{pass}
|\hat{x}_T-t| \leq  2\left[ {\frac{C_\delta R}{\mu \sqrt T}} \right]^{1/k}
\end{equation}
We  assume that $\hat{x}_T$ lies within the TNC region since the interval $|\eta(x) ~-~ \tfrac{1}{2}|~ \leq~ \epsilon_0$ has at least constant width $|x-t| \leq \delta_0 = (\epsilon_0/M)^{1/(k-1)}$, it will only take a constant number of iterations to find a point within it. A formal way to argue this would be to see that if the overall risk goes to zero like $\frac{C_\delta R}{\sqrt T}$, then the point cannot stay outside this constant sized region of width $\delta_0$ where $|\eta(x) -1/2| \leq \epsilon_0$, since it would accumulate a large constant risk of at least $\int\limits_{t}^{t+\delta_0} \mu |x-t|^{k-1} = \frac{\mu \delta_0^k}{k}$. So as long as $T$ is larger than a constant $T_0 := \frac{C_\delta^2 R^2 k^2}{\mu^2 \delta_0^{2k}}$, our bound in eq \ref{pass} holds with high probability (we can even assume we waste a constant number of queries to just get into the TNC region before using this algorithm).

\subsection{Adaptive One-Dimensional Active Threshold Learner} \label{subsec1D}

\begin{algorithm}[ht] \label{adapt}
 \caption{Adaptive Threshold Learner }
 \textbf{Input:} Domain $S$ of diameter $R$, oracle budget $T$, confidence $\delta$\\
 \vspace{1mm}
 \textbf{Black Box:} Any optimal passive learning procedure $P(x,R,N)$ that outputs an estimated threshold in $B(x,R)$ using $N$ queries\\
  \vspace{1mm}
Choose any $x_0 \in S$, $R_1=R, E = \log \sqrt {\frac{2T}{C^2_\td \log  T}}, N = \frac{T}{E}$ 
  \vspace{-2mm}
 \begin{algorithmic}[1]
   \WHILE{$1 \leq e \leq E$}
   \STATE $x_e \leftarrow P(x_{e-1},R_e,N)$
   \STATE $R_{e+1} \leftarrow \frac{R_e}{2}, e \leftarrow e+1$
   \ENDWHILE
 \end{algorithmic}
  \vspace{1mm}
 \textbf{Output:} $x_{E}$ \\
 \vspace{1mm}
 \end{algorithm}

Algorithm \ref{adapt} is a generalized epoch-based binary search, and we repeatedly perform passive learning in a halving search radius. Let the number of epochs be $E := \log \sqrt {\frac{2T}{C_\td^2 \log  T}} \leq \frac{\log T}{2}$ (if$^7$ constant $C_\td^2>2$) and $\td := 2\delta/\log T \leq \delta/E$. Let the time budget per epoch be $N := T/E$ (the same for every epoch) and the search radius in epoch $e \in \{1,...,E\}$ shrink as $R_e := 2^{-e+1} R$.

Let us define the minimizer of the risk within the ball of radius $R_e$ centered around $x_{e-1}$ at epoch $e$ as
$$x^*_e = \arg \min \big\{\Risk (x) : x \in S \cap B(x_{e-1},R_e) \big\} $$
Note that $x^*_e = t$ iff $t \in B(x_{e-1},R_e)$ and will be one end of the interval otherwise.

\begin{theorem} \label{Tadapt}
In the setting of one-dimensional active learning of thresholds, Algorithm 1 adaptively achieves $\Risk (x_{E}) - \Risk (t) = \tO \left( T ^{-\frac{k}{2k-2}} \right)$ with probability at least $1-\delta$ in $T$ queries when the unknown regression function $\eta(x)$ has unknown TNC parameters $\mu,k$.
\end{theorem}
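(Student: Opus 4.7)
The plan is to combine a union bound over epochs with an epoch-level induction that tracks whether the true threshold $t$ stays inside the shrinking search ball.

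\emph{Step 1 (probabilistic scaffolding).} Since $E \leq \tfrac{1}{2}\log T$ and $\td = 2\delta/\log T$, invoking the passive learner's high-probability guarantee (eq.~(\ref{pass})) in each epoch with confidence $\td$ and taking a union bound shows that, with probability at least $1-\delta$, the good event $G$ holds in which every epoch $e$ satisfies $\Risk(x_e) - \Risk(x_e^*) \leq C_{\td} R_e/\sqrt{N}$ and consequently (via the TNC lower bound at $t$) the point-error bound $|x_e - x_e^*| \leq 2\bigl[C_{\td} R_e/(\mu \sqrt{N})\bigr]^{1/k}$ whenever $x_e^* = t$. All subsequent analysis is conditioned on $G$.

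\emph{Step 2 (critical radius and main invariant).} Define the critical radius $R^* := \bigl[4^k C_{\td}/(\mu\sqrt{N})\bigr]^{1/(k-1)}$, at which the passive learner's point-error bound exactly matches the halving step. I would prove by induction on $e$ that, as long as $R_e \geq R^*$ and $t \in B(x_{e-1}, R_e)$, one has $x_e^* = t$ and $|x_e - t| \leq R_e/2 = R_{e+1}$, so $t \in B(x_e, R_{e+1})$. The base case $t \in B(x_0, R)$ is immediate from the diameter assumption, and the inductive step reduces to the algebraic check that $2[C_{\td} R_e/(\mu\sqrt{N})]^{1/k} \leq R_e/2$ is equivalent to $R_e \geq R^*$. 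Letting $e^* := \max\{e : R_e \geq R^*\}$, the invariant carries through epoch $e^*+1$.

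\emph{Step 3 (reading off the minimax rate).} At epoch $e^*+1$, the passive learner's risk guarantee gives $\Risk(x_{e^*+1}) - \Risk(t) \leq C_{\td} R_{e^*+1}/\sqrt{N} \leq C_{\td} R^*/\sqrt{N}$. Using $(R^*)^{k-1} = 4^k C_{\td}/(\mu\sqrt{N})$ this simplifies to $\Theta\bigl((1/\sqrt{N})^{k/(k-1)}\bigr)$, and the choice $N = T/E = \Theta(T/\log T)$ makes the bound $\tO\bigl(T^{-k/(2(k-1))}\bigr)$, matching the minimax rate of Lemma~\ref{BZ}. Crucially, this calculation is oblivious to knowing $\mu$ or $k$: the algorithm merely halves $R_e$, and $e^*$ is automatically the epoch at which halving has gone far enough to achieve the target rate.

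\emph{Step 4 (post-critical robustness -- the main obstacle).} When $k>2$, the chosen $E$ causes $R_E < R^*$, so after epoch $e^*+1$ the ball may fail to contain $t$ and the invariant breaks; the delicate part is arguing that the remaining $E-e^*-1$ epochs do not inflate the risk beyond the rate achieved at $e^*+1$. The plan rests on two observations: (i) once $t$ leaves the ball, the in-ball minimizer $x_e^*$ sits at the boundary point closest to $t$, so each epoch's center $x_e$ is pushed toward $t$ rather than away; and (ii) the geometric sum $\sum_{j > e^*+1} R_j$ is bounded by $2 R_{e^*+2} = O(R^*)$, which controls $|x_E^* - t| = O(R^*)$ uniformly through the post-critical phase. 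Combining the in-ball passive-learner bound $\Risk(x_E) - \Risk(x_E^*) \leq C_{\td} R_E/\sqrt{N}$ with the TNC upper bound $\Risk(x_E^*) - \Risk(t) \leq M|x_E^* - t|^k/k = O((R^*)^k) = \tO(T^{-k/(2(k-1))})$ yields $\Risk(x_E) - \Risk(t) = \tO\bigl(T^{-k/(2(k-1))}\bigr)$, completing the argument. This post-critical bookkeeping is the main technical subtlety; the key algorithmic design choice is that $E$ and $N$ are selected so that the critical-epoch rate automatically matches the minimax rate for \emph{every} unknown $(k,\mu)$.
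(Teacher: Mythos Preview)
Your proposal is correct and follows the same overall architecture as the paper's proof: set up a per-epoch passive-learner guarantee, identify a critical epoch $e^*$ determined (implicitly) by the unknown $\mu,k$, show by induction that $t$ remains in the search ball through $e^*$ (your Step~2 matches the paper's Lemma~\ref{before} almost line-for-line), and then argue separately that the remaining epochs cannot undo the progress. The definition of $e^*$ via a critical radius $R^*$ is a clean repackaging of the paper's case analysis on $\mu$ in equations~(\ref{musmall})--(\ref{mubig}).

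The one genuine difference is in the post-critical phase. You control $|x_E^*-t|$ by a geometric sum of the radii and then convert to risk via the TNC \emph{upper} bound, picking up the constant $M$. The paper instead telescopes the risk directly: since $x_{e-1}$ is trivially in $B(x_{e-1},R_e)$, one has $\Risk(x_{e-1})\geq \Risk(x_e^*)$, and therefore $\Risk(x_e)-\Risk(x_{e-1})\leq \Risk(x_e)-\Risk(x_e^*)\leq C_{\td}R_e/\sqrt{N}$; summing the geometric series gives $\Risk(x_E)-\Risk(x_{e^*})\leq C_{\td}R_{e^*}/\sqrt{N}$ without ever invoking the upper TNC constant $M$ or translating back through point-error. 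Your route works, but the paper's observation that the previous center always lies in the current ball is sharper and avoids an unnecessary dependence on $M$. A minor remark: your observation~(i) (``each epoch's center $x_e$ is pushed toward $t$'') is not quite right as stated (the passive learner's output $x_e$ need not move toward $t$) and is not actually used---observation~(ii) alone carries your argument.
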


\begin{proof} Since we use an optimal passive learning subroutine at every epoch, we know that after each epoch $e$ we have with probability at least $1 - \td$ \footnote{By VC theory for threshold classifiers or similar arguments in \cite{DGL96}, $C^2_\td \sim \log(1/\td) \sim\log \log T$ since $\td \sim \delta/ \log T$. We treat it as constant for clarity of exposition, but actually lose $\log \log T$ factors like the high probability arguments in \cite{HK11} and \cite{RS13}}\label{loglog}
\begin{equation}\label{perepoch}
\Risk (x_{e}) - \Risk (x^*_e) \leq \frac{C_\td R_e}{\sqrt{T/E}} \leq C_\td R_e \sqrt{\frac{\log T}{2T}} 
\end{equation}
Since $\eta(x)$ satisfies the TNC (and is bounded above by $1$), we have for all $x$
$$\mu |x-t|^{k-1} \leq |\eta(x) - 1/2| \leq 1$$ 
If the set has diameter $R$, one of the endpoints must be at least $R/2$ away from $t$, and hence we get a limitation on the maximum value of $\mu$ as $\mu \leq \frac{1}{(R/2)^{k-1}}$.
Since $k \geq 2$ and $E \geq 2$, and $2^{-E} = C_\td \sqrt{\frac{\log T}{2T}}$, using simple algebra we get 
$$ \mu \leq \frac{ 2^{(k-2)E+2}}{(R/2)^{k-1}} = \frac{4.2^{-E}2^{(k-1)E}2^{(k-1)}}{R^{k-1}} = \frac{4.2^{-E}2^{(k-1)}}{(2^{-E}R)^{k-1}} = \frac{4 C_\td 2^{k-1}}{R_{E+1}^{k-1}} \sqrt{\frac{\log T}{2T}}$$
We prove that we will be appropriately close to $t$ after some epoch $e^*$ by doing case analysis on $\mu$. When the true unknown $\mu$ is sufficiently small, i.e.
\begin{equation}\label{musmall}
\mu \leq  \frac{4C_\td 2^{k-1}}{R_2^{k-1}} \sqrt{\frac{\log T}{2T}}
\end{equation}
then we show that we'll be done after $e^*=1$. Otherwise, we will be done after epoch $2 \leq e^* \leq E$ if the true $\mu$ lies in the range 
\begin{equation}\label{mubig}
\frac{4 C_\td 2^{k-1}}{R_{e^*}^{k-1}} \sqrt{\frac{\log T}{2T}} \leq \mu \leq \frac{4C_\td 2^{k-1}}{R_{e^*+1}^{k-1}} \sqrt{\frac{\log T}{2T}}
\end{equation}
To see why we'll be done, equations (\ref{musmall}) and (\ref{mubig}) imply $R_{e^*+1} \leq 2 \left( \frac{8C_\td^2 \log T}{\mu^2 T} \right)^{\frac{1}{2k-2}}$ after epoch $e^*$ and plugging this into equation (\ref{perepoch}) with $R_{e^*} = 2R_{e^*+1}$, we get
\begin{equation}\label{estar}
\Risk (x_{e^*}) - \Risk (x^*_{e^*}) \leq C_\td R_{e^*} \left( \frac{\log T}{2T} \right)^{\frac1{2}} = \bO \left( \left( \frac{\log T}{T} \right)^{\frac{k}{2k-2}} \right) 
\end{equation}
There are two issues hindering the completion of our proof. The first is that even though $x_1^* = t$ to start off with, it might be the case that $x^*_{e^*}$ is far away from $t$ since we are chopping the radius by half at every epoch. Interestingly, in lemma \ref{before} we will prove that round $e^*$ is the last round up to which $x^*_e = t$. This would imply from eq (\ref{estar}) that
\begin{equation}\label{eqbefore}
\Risk (x_{e^*}) - \Risk (t) = \tO \left( T^{-\frac{k}{2k-2}} \right)
\end{equation}
Secondly we might be concerned that after the round $e^*$, we may move further away from $t$ in later epochs. However, we will show that since the radii are decreasing geometrically by half at every epoch, we cannot really wander too far away from $x_{e^*}$. This will give us a bound (see lemma \ref{after}) like
\begin{equation}\label{eqafter}
\Risk (x_{E}) - \Risk (x_{e^*})  =  \tO \left( T^{-\frac{k}{2k-2}} \right)
\end{equation}
We will essentially prove that the final point $x_{e^*}$ of epoch $e^*$ is sufficiently close to the true optimum $t$, and the final point of the algorithm $x_{E}$  is sufficiently close to $x_{e^*}$. Summing eq (\ref{eqbefore}) and eq (\ref{eqafter}) yields our desired result.

\begin{lemma}\label{before}
For all $e \leq e^*$, conditioned on having $x^*_{e-1}=t$, with probability $1-\td$ we have $x^*_e = t$. In other words, up to epoch $e^*$, the optimal classifier in the domain of each epoch is the true threshold with high probability. \end{lemma}

\begin{proof}
$x_e^* = t$ will hold in epoch $e$ if the distance between the first point $x_{e-1}$ in the epoch $e$ is such that the ball of radius $R_e$ around it actually contains $t$, or mathematically if $| x_{e-1} - t | \leq R_e$. This is trivially satified for $e=1$, and assuming that it is true for epoch $e-1$ we will show show by induction that it holds true for epoch $e \leq e^*$ w.p. $1-\td$. Notice that using equation (\ref{pass}), conditioned on the induction going through in previous rounds ($t$ being within the search radius), after the completion of round $e-1$ we have with probability $1 - \td$
$$|x_{e-1} - t | \leq 2 \left[ {\frac{C_\td R_{e-1}}{\mu \sqrt {T/E}}} \right]^{1/k} $$
If this was upper bounded by $R_e$, then the induction would go through. So what we would really like to show is that $2 \left [\frac{C_\td   R_{e-1}}{\mu \sqrt{T/E}} \right ]^{\frac{1}{k}} \leq R_e$. Since $R_{e-1} = 2R_{e}$, we effectively want to show $\frac{2^k C_\td 2R_e  }{\mu} \sqrt{ \frac{E}{ T}} \leq  R_{e}^k $ or equivalently that for all $e \leq e^*$ we would like to have $\frac{4C_\td 2^{k-1}}{R_{e}^{k-1}} \sqrt{ \frac{ E}{ T}} \leq \mu$. Since $E \leq \frac{\log  T}{2}$, we would be achieving something stronger if we showed
$$ \frac{4C_\td 2^{k-1}}{R_{e}^{k-1}} \sqrt{ \frac{ \log T}{2 T}} \leq \mu$$
which is known to be true for every epoch up to $e^*$ by equation (\ref{mubig}).
\end{proof}

\begin{lemma} \label{after}
For all $e^* < e \leq E$, $\Risk (x_{e}) - \Risk (x_{e^*})  \leq \frac{C_\td R_{e^*}}{\sqrt {T/E}} =  \tO \left( T^{-\frac{k}{2k-2}} \right) $ w.p. $1-\td$, ie after epoch $e^*$, we cannot deviate much from where we ended epoch $e^*$. \end{lemma}

\begin{proof}
For $e > e^*$, we have with probability at least $1-\td$
$$\Risk (x_{e}) - \Risk (x_{e-1})  \leq  \Risk (x_{e}) - \Risk (x^*_e) \leq \frac{C_\td R_e}{\sqrt {T/E}}$$
and hence even for the final epoch $E$, we have with probability $(1 - \td)^{E-e^*}$
$$\Risk (x_{E}) - \Risk (x_{e^*}) = \sum_{e=e^*+1}^E [\Risk (x_{e}) - \Risk (x_{e-1})] \leq \sum_{e=e^*+1}^E \frac{C_\td R_e}{\sqrt {T/E}}$$
Since the radii are halving in size, this is upper bounded (like equation (\ref{estar})) by
$$ \frac{C_\td R_{e^*}}{\sqrt {T/E}} [1/2 + 1/4 + 1/8 +...] \leq \frac{C_\td R_{e^*}}{\sqrt {T/E}} = \tO \left( T^{-\frac{k}{2k-2}} \right)$$
\end{proof}

These lemmas justify the use of equations (\ref{eqbefore}) and (\ref{eqafter}), whose sum yields our desired result. Notice that the overall probability of success is at least $(1 - \td)^E \geq 1 - \delta$, hence concluding the proof of the theorem.

\end{proof}



\section{Randomized Stochastic-Sign Coordinate Descent}

We now describe an algorithm that can do stochastic optimization of $k$-UC and LkSS functions in $d>1$ dimensions when given access to a stochastic sign oracle and a black-box 1-D active learning algorithm, such as our adaptive scheme from the previous section as a subroutine. The procedure is well-known in the literature, but the idea that one only needs noisy gradient signs to perform minimization optimally, and that one can use active learning as a line-search procedure, is novel to the best of our knowledge.

The idea is to simply perform random coordinate-wise descent with approximate line search, where the subroutine for line search is an optimal active threshold learning algorithm that is used to approach the minimum of the function along the chosen direction. Let the gradient at epoch $e$ be called $\nabla_{e-1} = \nabla f(x_{e-1})$, the unit vector direction of descent $d_e$ be a unit coordinate vector chosen randomly from $\{1...d\}$, and our step size from $x_{e-1}$ be $\alpha_e$ (determined by active learning) so that our next point is $x_e := x_{e-1} + \alpha_e d_e$.

Assume, for analysis, that the optimum of $f_e(\alpha) := f(x_{e-1} + \alpha d_e)$ is  
$$\alpha^*_e := \arg \min_\alpha f(x_{e-1} + \alpha d_e) \mbox{ and } x^*_e := x_{e-1} + \alpha_e^* d_e$$
where (due to optimality) the derivative is 
\begin{equation} \label{0deriv}
\nabla f_e(\alpha_e^*) = 0 = \nabla f(x^*_e)^\top d_e
\end{equation}
The line search to find $\alpha_e$ and $x_e$ that approximates the minimum $x^*_e$ can be accomplished by any optimal active learning algorithm algorithm, once we fix the number of time steps per line search.


\subsection{Analysis of Algorithm \ref{rscdd}}
\vspace{-4mm}
\begin{algorithm}[h!] \label{rscdd}
 \caption{Randomized Stochastic-Sign Coordinate Descent}
  \textbf{Input:} set $S$ of diameter $R$, query budget $T$ \\
  \vspace{1.2mm}
  \textbf{Oracle:} stochastic sign oracle $O_f (x,j)$ returning noisy $\mbox{sign}\big([\nabla f(x)]_j \big)$\\
  \vspace{1.2mm}  
  \textbf{BlackBox:} algorithm $LS (x,d,n)$ : line search from $x$, direction $d$, for $n$ steps\\
    \vspace{1.2mm}  
Choose any $x_0 \in S$,  $E = d(\log T)^2$
 \begin{algorithmic}[1]
   \WHILE{$1 \leq e \leq E$}
   \STATE Choose a unit coordinate vector $d_e$ from $\{1...d\}$ uniformly at random
   \STATE $x_e \leftarrow$ $LS(x_{e-1},d_e,T/E)$ using $O_f$
   \STATE $e \leftarrow e+1$
   \ENDWHILE
 \end{algorithmic}
 \textbf{Output:} $x_{E}$\\
     \vspace{1.5mm}  
 \end{algorithm}
 \vspace{-4mm}

Let the number of epochs be $E = d (\log T)^2$, and the number of time steps per epoch is $T/E$. We can do a line search from $x_{e-1}$, to get $x_e$ that approximates $x^*_e$ well in function error in $T/E = \tO(T)$ steps using an active learning subroutine and let the resulting function-error be denoted by $\epsilon' = \tO \Big(T^{-\frac{k}{2k-2}} \Big)$. 

$$f(x_e) \leq f(x_e^*) + \epsilon'$$
Also, LkSS and UC allow us to infer (for $k^* = \frac{k}{k-1}$, i.e. $1/k + 1/k^* = 1$)
$$ f(x_{e-1}) - f(x^*_e)  \ \geq \ \frac{\lambda}{2} \|x_{e-1} - x^*_e\|^k \ \geq \ \frac{\lambda}{2\Lambda^{k^*}} \big| \nabla_{e-1}^\top d_e \big|^{k^*}$$
Eliminating $f(x^*_e)$ from the above equations, subtracting $f(x^*)$ from both sides, denoting $\Delta_e := f(x_e) - f(x^*)$ and taking expectations 
$$
\E[\Delta_{e}] \leq \E[\Delta_{e-1}] - \frac{\lambda}{2\Lambda^{k^*}} \E \Big[ \big| \nabla_{e-1}^\top d_e \big|^{k^*} \Big] + \epsilon'
$$
Since\footnote{$k \geq 2 \implies 1 \leq k^* \leq 2 \implies \| . \|_{k^*} \geq \|.\|_2$} $\E \Big[|\nabla_{e-1}^\top d_e|^{k^*} \big| d_1,...,d_{e-1} \Big] = \frac1{d} \|\nabla_{e-1}\|_{k^*}^{k^*} \geq \frac1{d} \|\nabla_{e-1}\|^{k^*}$ we get
$$
\E[\Delta_{e}] \leq \E[\Delta_{e-1}] - \frac{\lambda}{2d\Lambda^{k^*}} \E \Big[\|\nabla_{e-1}\|^{k^*} \Big] + \epsilon'
$$
By convexity, Cauchy-Schwartz and UC\footnote{$\Delta_{e-1}^k \leq [\nabla_{e-1}^\top(x_{e-1} - x^*)]^k \leq \|\nabla_{e-1}\|^k\|x_{e-1} - x^*\|^k \leq \|\nabla_{e-1}\|^\kappa \frac{2}{\lambda}\Delta_{e-1}$}, $\|\nabla_{e-1}\|^{k^*} \geq \left( \frac{\lambda}{2} \right) ^{1/k-1}\Delta_{e-1}$, we get
$$
\E[\Delta_{e}] \leq \E[\Delta_{e-1}] \left( 1 - \frac1{d} \left( \frac{\lambda}{2\Lambda} \right)^{k^*} \right ) + \epsilon'
$$
Defining\footnote{Since $1 < k^* \leq 2$ and $\Lambda > \lambda/2$, we have $C<1$} $C:= \frac1{d} \left( \frac{\lambda}{2\Lambda} \right)^{k^*} < 1$, we get the recurrence
$$\E[\Delta_{e}] - \frac{\epsilon'}{C} \leq (1-C)\left( \E[\Delta_{e-1}] - \frac{\epsilon'}{C} \right)$$
Since $E = d (\log T)^2$ and $\Delta_0 \leq L\|x_0 - x^*\| \leq LR$, after the last epoch, we have
\begin{align*}
\E[\Delta_E] - \frac{\epsilon'}{C} \ &\leq \ (1-C)^E \left (\Delta_0 - \frac{\epsilon'}{C} \right ) \ \leq \ \exp \big\{-Cd (\log T)^2 \big\} \Delta_0 \ \\ 
&\leq \ LR T^{-Cd \log T} 
\end{align*}
As long as $T > \exp \left\{ (2\Lambda/\lambda)^{k^*} \right\}$, a constant, we have $Cd \log T \geq 1$ and
$$\E[\Delta_E] = \bO (\epsilon') + \mathrm{o}(T^{-1}) = \tO \Big(T^{-\frac{k}{2k-2}} \Big)$$
which is the desired result. Notice that in this section we didn't need to know $\lambda, \Lambda, k$, because we simply run randomized coordinate descent for $E = d (\log T)^2$ epochs with $T/E$ steps per subroutine, and the active learning subroutine was also adaptive to the appropriately calculated TNC parameters. In summary,

\begin{theorem} \label{Tsscd}
Given access to only noisy gradient sign information from a stochastic sign oracle, Randomized Stochastic-Sign Coordinate Descent can minimize UC and LkSS functions at the minimax optimal convergence rate for expected function error of $\tO(T^{-\frac{k}{2k-2}})$ adaptive to all unknown convexity and smoothness parameters. As a special case for $k=2$, strongly convex and strongly smooth functions can be minimized in $\tO(1/T)$ steps.
\end{theorem}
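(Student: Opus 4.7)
The plan is to track the expected suboptimality $\Delta_e := \E[f(x_e) - f(x^*)]$ across the $E = d(\log T)^2$ epochs and establish a contractive recurrence whose fixed point is the per-epoch line-search accuracy. First, I would invoke Theorem~\ref{Tadapt} (combined with the one-dimensional argument in Section 2.2) as the line-search oracle: each call uses $N = T/E = \tO(T)$ queries, and the restriction $f_e(\alpha) := f(x_{e-1} + \alpha d_e)$ is a one-dimensional $k$-UC and LkSS function, so the induced sign regression satisfies TNC with exponent $k$. The subroutine therefore returns $x_e$ with directional function error $f(x_e) - f(x_e^*) \leq \epsilon' = \tO(T^{-k/(2k-2)})$.

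The second step converts directional decrease into a lower bound on $|\nabla_{e-1}^\top d_e|^{k^*}$, where $k^* = k/(k-1)$. Applying UC along $d_e$ gives $f(x_{e-1}) - f(x_e^*) \geq \frac{\lambda}{2}\|x_{e-1} - x_e^*\|^k$, while LkSS gives $|\nabla_{e-1}^\top d_e| \leq \Lambda \|x_{e-1} - x_e^*\|^{k-1}$. Eliminating the distance yields $f(x_{e-1}) - f(x_e^*) \geq \frac{\lambda}{2\Lambda^{k^*}} |\nabla_{e-1}^\top d_e|^{k^*}$. Taking expectation over the uniformly random coordinate $d_e$ gives $\E[|\nabla_{e-1}^\top d_e|^{k^*} \mid x_{e-1}] = \frac{1}{d}\|\nabla_{e-1}\|_{k^*}^{k^*} \geq \frac{1}{d}\|\nabla_{e-1}\|_2^{k^*}$ since $1 \leq k^* \leq 2$. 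A final appeal to convexity, Cauchy--Schwarz, and UC then lower-bounds $\|\nabla_{e-1}\|_2^{k^*}$ by a constant multiple of $\Delta_{e-1}$.

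Combining the above, together with $f(x_e) \leq f(x_e^*) + \epsilon'$, produces the recurrence $\Delta_e \leq (1 - C)\Delta_{e-1} + \epsilon'$ with $C = \Theta(d^{-1}(\lambda/\Lambda)^{k^*}) \in (0,1)$. Unrolling over $E = d(\log T)^2$ epochs and bounding $\Delta_0 \leq LR$ shows that the homogeneous part decays like $\exp(-CE)$, which for $T$ above a universal constant threshold is $\mathrm{o}(T^{-1})$ and hence dominated by the forced term $\epsilon'/C = \tO(T^{-k/(2k-2)})$; specializing to $k = 2$ recovers $\tO(1/T)$. Adaptivity is inherited: the schedule $E = d(\log T)^2$ and budget $T/E$ depend only on $d$ and $T$, and the line search itself (Theorem~\ref{Tadapt}) is adaptive to the unknown TNC/UC parameters.

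The main obstacle I anticipate is the second step: sharpening the relation between directional progress and the full gradient so that the contraction constant $C$ depends on dimension only through the factor $1/d$ (otherwise $E = d\,\mathrm{polylog}(T)$ epochs would not suffice). This is precisely where the LkSS assumption earns its keep, and where the conjugate exponent $k^*$ forces the $\ell_{k^*}$-vs-$\ell_2$ norm comparison that costs no extra dimension factor only because $k^* \leq 2$.
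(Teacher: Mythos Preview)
Your proposal is correct and follows essentially the same approach as the paper: the same recurrence $\E[\Delta_e] \leq (1-C)\E[\Delta_{e-1}] + \epsilon'$ with $C = \tfrac{1}{d}\big(\tfrac{\lambda}{2\Lambda}\big)^{k^*}$, obtained via the identical chain (UC + LkSS to bound directional progress by $|\nabla_{e-1}^\top d_e|^{k^*}$, the $\ell_{k^*}$-vs-$\ell_2$ comparison using $k^*\le 2$, and the convexity/Cauchy--Schwarz/UC step to close the loop), and then unrolled over $E=d(\log T)^2$ epochs so the homogeneous term is $o(T^{-1})$. The only cosmetic difference is that you invoke Theorem~\ref{Tadapt} explicitly for the line-search guarantee, whereas the paper phrases it as a generic optimal active-learning subroutine yielding the same $\epsilon'=\tO(T^{-k/(2k-2)})$.
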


\subsection{Gradient Sign-Preserving Computations}

A practical concern for implementing optimization algorithms is machine precision, the number of decimals to which real numbers are stored. Finite space may limit the accuracy with which every gradient can be stored, and one may ask how much these inaccuracies may affect the final convergence rate - how is the query complexity of optimization affected if the true gradients were rounded to one or two decimal points? If the gradients were randomly rounded (to remain unbiased), then one might guess that we could easily achieve stochastic first-order optimization rates.

However, our results give a surprising answer to that question, as a similar argument reveals that for UC and LkSS functions (with strongly convex and strongly smooth being a special case), our algorithm achieves exponential rates. Since rounding errors do not flip any sign in the gradient, even if the gradient was rounded or decimal points were dropped as much as possible and we were to return only a single bit per coordinate having the true signs, then one can still achieve the exponentially fast convergence rate observed in non-stochastic settings - our algorithm needs only a logarithmic number of epochs, and in each epoch active learning will approach the directional minimum exponentially fast with noiseless gradient signs using a perfect binary search. In fact, our algorithm is the natural generalization for a higher-dimensional binary search, both in the deterministic and stochastic settings.

We can summarize this in the following theorem:

\begin{theorem}
Given access to gradient signs in the presence of sign-preserving noise (such as deterministic or random rounding of gradients, dropping decimal places for lower precision, etc), Randomized Stochastic-Sign Coordinate Descent can minimize UC and LkSS functions exponentially fast, with a function error convergence rate of $\tO(\exp\{-T\})$.
\end{theorem}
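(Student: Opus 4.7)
The plan is to revisit the two nested layers of analysis from Theorem~\ref{Tsscd}, replacing the noisy active learning subroutine with a noiseless version whose error decays exponentially in its query budget, and then re-balancing the outer coordinate-descent recurrence.

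First I would formalize what sign-preserving means for the stochastic sign oracle: with probability one, $\hs_j(x) = \mathrm{sign}([\nabla f(x)]_j)$ whenever $[\nabla f(x)]_j \neq 0$. Consequently, the 1-D active-learning subroutine that Algorithm~\ref{rscdd} invokes along coordinate $d_e$ is presented with the noiseless induced regression function, which is simply the indicator of the half-line containing the directional optimum $x_e^\ast$. In this regime there is no need for the robust BZ procedure or the adaptive epoch-based learner of Section~\ref{subsec1D}; plain bisection suffices. After $N$ queries, bisection localizes $\alpha_e^\ast$ to a sub-interval of length $R\cdot 2^{-N}$, giving point error $|x_e - x_e^\ast| \leq R\cdot 2^{-N}$.

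Next I would translate that exponentially small point error into an exponentially small function error using assumption LkSS along the chosen coordinate. Integrating $|[\nabla f(y)]_{d_e}| \leq \Lambda \|y - x_e^\ast\|^{k-1}$ from $x_e^\ast$ to $x_e$ gives
\[
f(x_e) - f(x_e^\ast) \;\leq\; \tfrac{\Lambda}{k}\, (R\cdot 2^{-N})^k \;=\; \tO\!\left(\exp(-\Theta(kN))\right),
\]
so the per-epoch line-search error is $\epsilon' = \tO(\exp(-\Theta(T/E)))$ rather than the polynomial $\tO(T^{-k/(2k-2)})$ used before.

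Now I would plug this sharper $\epsilon'$ into the recurrence derived in the proof of Theorem~\ref{Tsscd},
\[
\E[\Delta_e] - \epsilon'/C \;\leq\; (1-C)\!\left(\E[\Delta_{e-1}] - \epsilon'/C\right),
\qquad C = \tfrac{1}{d}(\lambda/2\Lambda)^{k^\ast},
\]
and keep the schedule $E = d(\log T)^2$, $N = T/E$ used in Algorithm~\ref{rscdd}. Unrolling yields $\E[\Delta_E] \leq (1-C)^E\,\Delta_0 + \epsilon'/C$. The first term is $\exp(-CE)\,\Delta_0 \leq LR\cdot T^{-\log T}$, which is already faster than any polynomial. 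The second term is $\epsilon'/C = \tO(\exp(-\Theta(T/(d\log^2 T))))$. Both are of the form $\tO(\exp(-T))$ once the polylogarithmic factors in the exponent and the $d$-dependent constants are absorbed into the $\tO$ notation, giving the claimed exponential convergence.

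The main obstacle, such as it is, lies in two bookkeeping subtleties rather than any new idea. First, one has to check that the exponents balance: pushing $E$ smaller would speed up each line search but slow down the outer contraction, so I would verify that the Theorem~\ref{Tsscd} choice $E=d(\log T)^2$ remains (nearly) optimal when $\epsilon'$ is exponentially small, or alternatively rebalance to $E \asymp \sqrt{dT}$ and explicitly report the dependence on $d$. Second, the definition of ``sign-preserving'' must rule out sign flips even when $|[\nabla f(x)]_j|$ is tiny (as in the deterministic or random rounding examples, where the rounding magnitude is at most the precision and hence strictly smaller than any stored nonzero gradient), so that bisection never receives a corrupted bit; I would state this as a formal assumption and observe that it is met by the motivating examples in the theorem statement.
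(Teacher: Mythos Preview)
Your proposal is correct and follows essentially the same route as the paper: observe that sign-preserving noise renders the 1-D subroutine a noiseless binary search with exponentially small line-search error, then feed that improved $\epsilon'$ into the coordinate-descent recurrence from Theorem~\ref{Tsscd}. The paper's own argument is only the brief prose paragraph preceding the theorem, so your write-up is in fact more detailed; your remark about rebalancing $E$ (since $\exp(-CE)$ with $E=d(\log T)^2$ is super-polynomial but not truly $\exp(-\Theta(T))$) is a valid sharpening that the paper glosses over.
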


\section{Discussion}
While the assumption of smoothness is natural for strongly convex functions, our assumption of LkSS might appear strong in general. It is possible to relax this assumption and require the LkSS exponent to differ from the UC exponent, or to only assume strong smoothness - this still yields consistency for our algorithm, but the rate achieved is worse. \cite{JN10} and \cite{RS13} both have epoch based algorithms that achieve the minimax rates under just Lipschitz assumptions with access to a full-gradient stochastic first order oracle, but it is hard to prove the same rates for a coordinate descent procedure without smoothness assumptions.

Given a target function accuracy $\epsilon$ instead of query budget $T$, a similar randomized coordinate descent procedure to ours achieves the minimax rate with a similar proof, but it is non-adaptive since we presently don't have an adaptive active learning procedure when given $\epsilon$. As of now, we know no adaptive UC optimization procedure when given $\epsilon$.


Recently, \cite{BM11} analysed stochastic gradient descent with averaging, and show that for smooth functions, it is possible for an algorithm to automatically adapt between convexity and strong convexity, and in comparision we show how to adapt to unknown uniform convexity (strong convexity being a special case of $\kappa=2$). It may be possible to combine the ideas from this paper and \cite{BM11} to get a universally adaptive algorithm from convex to all degrees of uniform convexity. It would also be interesting to see if these ideas extend to connections between convex optimization and learning linear threshold functions.

In this paper, we exploit recently discovered theoretical connections 
by providing explicit algorithms that take advantage of them. 
We show how these could lead to cross-fertilization of fields in both directions and hope that this is just the beginning of a flourishing interaction where these insights may lead to many new algorithms if we leverage the theoretical relations in more innovative ways.


\bibliographystyle{agsm}
\bibliography{ALT13}


%
%
%

\end{document}